\newtheorem{lemma}{Lemma}[section]
\newtheorem{proposition}{Proposition}[section]
\newtheorem{theorem}{Theorem}[section]
\newcommand\figref{Figure~\ref}
\newcommand\expx[1]{\mathbb{E}_\mathbf{x} \left[ {#1} \right]}
\newcommand\expxt[1]{\mathbb{E}_{\tilde{\mathbf{x}}} \left[ {#1} \right]}
\newcommand\bw{\mathbf{w}}
\newcommand\bb{\mathbf{b}}
\newcommand\bwfrac{\frac{b}{\|\bw\|}}
\newcommand\ba{\mathbf{a}}
\newcommand\bx{{\mathbf{x}}}
\newcommand\bm{{\mathbf{m}}}
\newcommand\R{{\mathbb{R}}}
\renewcommand\L{{\mathcal{L}}}
\renewcommand\eqref[1]{Eq. (\ref{#1})}
\title{The effect of Target Normalization and Momentum on Dying ReLU}
\author{%
  Isac Arnekvist \\
  Division of Robotics, Perception and Learning\\
  Royal Institute of Technology (KTH)\\
  Stockholm \\
  \texttt{isacar@kth.se} \\
  % examples of more authors
  \And
  J. Frederico Carvalho \\
  Division of Robotics, Perception and Learning\\
  Royal Institute of Technology (KTH)\\
  Stockholm \\
  \texttt{jfpbdc@kth.se} \\
  \And
  Danica Kragic \\
  Division of Robotics, Perception and Learning\\
  Royal Institute of Technology (KTH)\\
  Stockholm \\
  \texttt{dani@kth.se} \\
  \And
  Johannes A. Stork \\
  Center for Applied Autonomous Sensor Systems\\
  Örebro University\\
  Örebro \\
  \texttt{johannesandreas.stork@oru.se} \\
  % \AND
  % Coauthor \\
  % Affiliation \\
  % Address \\
  % \texttt{email} \\
  % \And
  % Coauthor \\
  % Affiliation \\
  % Address \\
  % \texttt{email} \\
  % \And
  % Coauthor \\
  % Affiliation \\
  % Address \\
  % \texttt{email} \\
}
\begin{document}

\maketitle

\begin{abstract}

Optimizing parameters with momentum, normalizing data values, and using
rectified linear units (ReLUs) are popular choices in neural network (NN)
regression. Although ReLUs are popular, they can collapse to a constant function
and ``die'', effectively removing their contribution from the model. While some
mitigations are known, the underlying reasons of ReLUs dying during optimization
are currently poorly understood. In this paper, we consider the effects of
target normalization and momentum on dying ReLUs. We find empirically that unit
variance targets are well motivated and that ReLUs die more easily, when target
variance approaches zero. To further investigate this matter, we analyze a
discrete-time linear autonomous system, and show theoretically how this relates
to a model with a single ReLU and how common properties can result in dying
ReLU. We also analyze the gradients of a single-ReLU model to identify saddle
points and regions corresponding to dying ReLU and how parameters evolve into
these regions when momentum is used. Finally, we show empirically that this
problem persist, and is aggravated, for deeper models including
residual networks.

\end{abstract}

% !TEX root = main.tex
\section{Introduction}\label{sec:introduction}

Gradient-based optimization enables learning of powerful deep NN
models \cite{DarganShaveta2019ASoD,rumelhart1986learning}. However, most learning algorithms remain
sensitive to learning rate, scale of data values, and the choice of activation
function---making deep NN models hard to train
\cite{srivastava2015training, du2019gradient}. Stochastic
gradient descent with momentum \cite{sutskever2013importance,adam}, normalizing data values to have
zero mean and unit variance \cite{lecun2012efficient}, and employing rectified linear units
(ReLUs) in NNs \cite{lecun2015deep, ramachandran2017searching,
nair2010rectified} have emerged as an empirically motivated and popular
practice. In this paper, we analyze a specific failure case of this practice,
referred to as ``dying'' ReLU.

The ReLU activation function, $y = \max \{x, 0\}$ is a popular choice of activation
function and has been shown to have superior
training performance in various domains \cite{glorot2011deep, sun2015deeply}.
ReLUs can sometimes be collapse to a constant (zero) function for a given set
of inputs. Such a ReLU is considered ``dead'' and does not contribute to a
learned model. ReLUs can be initialized dead \cite{lu2019dying} or die during
optimization, the latter being a major obstacle in training deep NNs
\cite{trottier2017parametric, agarap2018deep}. Once dead, gradients are zero
making recovery possible only if inputs change distribution.
Over time, large parts of a NN can end up dead which reduces model capacity.

Mitigations to dying ReLU include
modifying the ReLU to also activate for negative inputs
\cite{maas2013rectifier,clevert2016fast,he2015delving}, training procedures with
normalization steps \cite{ba2016layer, ioffe2015batch}, and initialization
methods \cite{lu2019dying}. While these approaches have some success in
practice, the underlying cause for ReLUs dying during optimization is, to our
knowledge, still not understood.

\begin{figure}[ht!]
    \centering
    \includegraphics[width=0.45\textwidth]{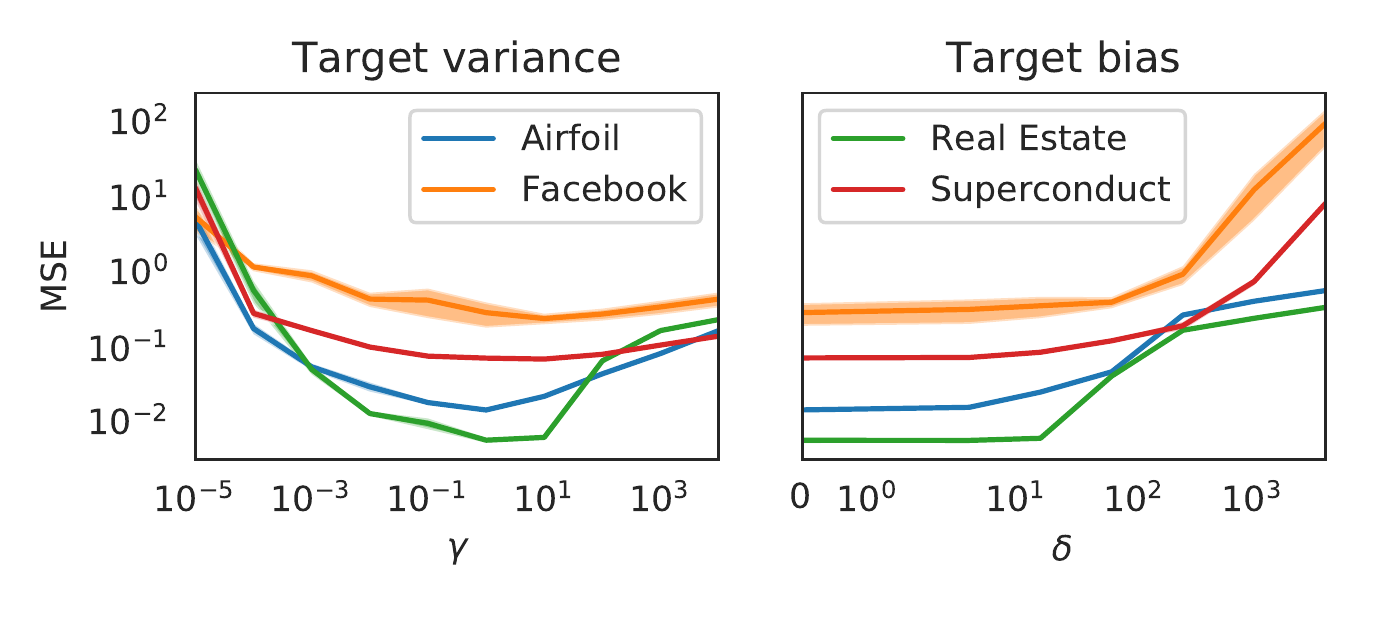}
    %\caption{
%Effect of target normalization with scale $\gamma$ and bias $\delta$ on training MSE (left: $\delta = 0$, right: $\gamma = 1$). Shaded region shows $\pm$ the standard error of mean. For $\gamma = 1$ and $\delta = 0$, which is the common choice of zero mean and unit variance, a ``sweet spot'' can be observed.}
    %\label{fig:scale_bias_rmse}
    \centering
    \includegraphics[width=0.45\textwidth]{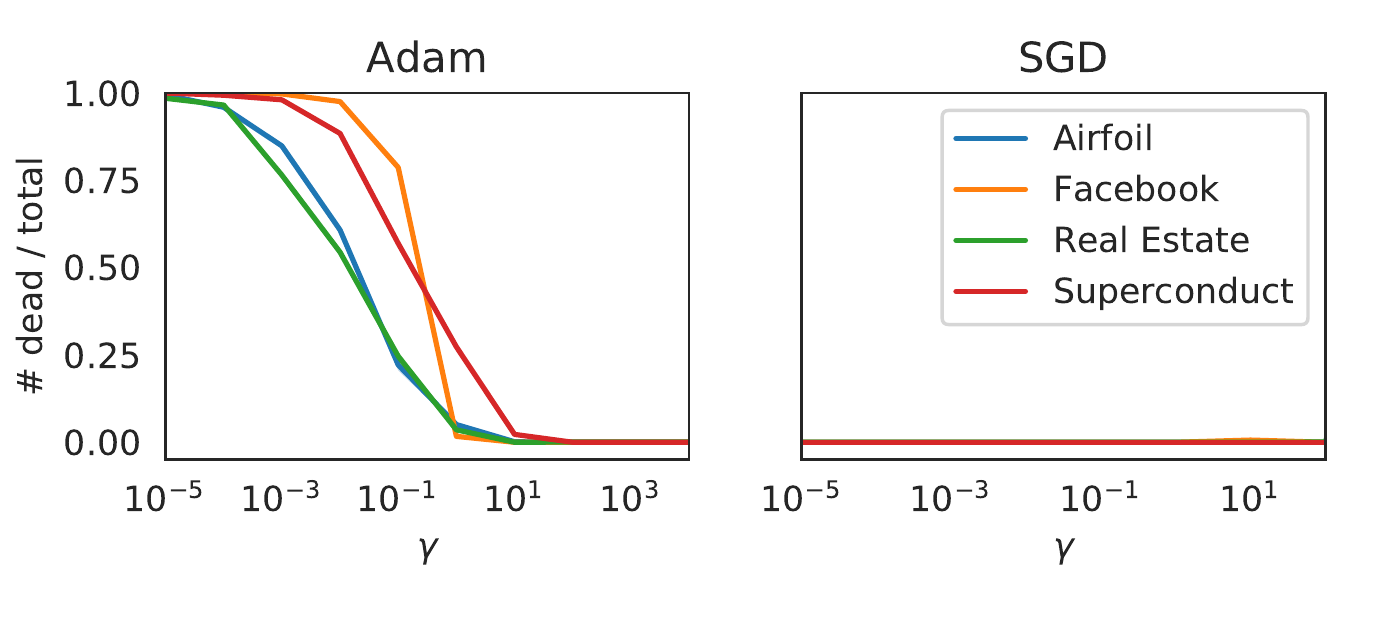}

    \caption{
  The performance of models, in terms of fit to the training data, degrades as
  the variance ($\gamma$) of target values deviate from $1$, or the mean
  ($\delta$) deviates from $0$. Here shown in terms of mean-squared error (MSE)
  on four different datasets \cite{uci}. Furthermore, decreasing $\gamma$ leads
  to more dying ReLUs only with the use of momentum optimization \cite{adam}.
  Details of this experiment can be found in Appendix \ref{regression_details}.}

    \label{fig:overview}
\end{figure}

In this paper, we analyze the observation illustrated in \figref{fig:overview}
that regression performance degrades with smaller target variances, and along with
momentum optimization leads to dead ReLU. Although target normalization is a
common pre-processing step, we believe a scientific understanding of \emph{why}
it is important is missing, especially with the connection to momentum
optimization.
For our theoretical results,
we first show that an affine approximator trained with gradient descent and momentum corresponds
to a discrete-time linear autonomous system. Introducing momentum into this system
results in complex eigenvalues and parameters that oscillate. We further
show that a single-ReLU model has two cones in parameter space; one for which
the properties of the linear system is shared, and one that corresponds to dead ReLU.

We derive analytic gradients for the single-ReLU model to further gain insight
and to identify critical points (i.e. global optima and saddle points) in
parameter space. By inspection of numerical examples, we also identify regions
where ReLUs tend to converge to the global optimum (without dying) and how these
regions change with momentum. Lastly, we show empirically that the problem of
dying ReLU is aggravated in deeper models, including residual neural networks.

% !TEX root = main.tex
\section{Related work}

% This section is only one page long. Maybe the reader can manage without subsection. It save quite some space.
%\subsection{Dying ReLU}

In a recent paper \cite{lu2019dying}, the authors identify dying ReLUs
as a cause of vanishing gradients. This is a fundamental
problem in NNs \cite{poole2016exponential, hanin2018neural}. In
general, this can be caused by ReLUs being initialized dead or dying during
optimization. Theoretical results about initialization and dead ReLU NNs
are presented by \citet{lu2019dying}. Growing NN depth towards infinity and
initializing parameters from symmetric distributions both lead to dead models.
However, asymmetric initialization can effectively prevent this outcome.
Empirical results about ReLUs dying during optimization are presented by
\citet{wu2018ans}. Similar to us, they observe a relationship between dying
ReLUs and the scale of target values. In contrast to us, they do not investigate
the underlying cause.

%This insight suggest that applying normalization methods to target values could be useful for addressing dying ReLUs.

%\citet{lu2019dying} showed that a neural network can be ``born'' dead, and that the probability of so happening goes to one as the depth of the network increases.

%Dying ReLU has been identified as an obstacle to training deep neural networks, and can even be a problem already after initialization, before optimization starts.

%% Moved to introduction
%Several novel activation functions have been proposed to alleviate the problem of dying ReLU \citep{trottier2017parametric,he2015delving,maas2013rectifier} but still ReLU remains an attractive alternative due to its simplicity, the resulting sparsity in activations \citep{glorot2011deep}, and its competitive performance \citep{ramachandran2017searching}.

%That said, ReLU will likely remain a common choice as activation function in application and research, but the connection between the variance of target values and dying ReLU is still not understood.

%\subsection{Data Normalization}

\paragraph{Normalization of layer input values.}

%Normalization of values pertaining to the training of neural networks can be divided into two categories: (1) Normalization of input values, here including inputs to the neural network originating from the dataset, but also inputs to subsequent layers in the network produced by previous layers. (2) Normalization of target values, i.e., the values from the dataset that we aim to predict.

The effects of input value distribution has been studied for a long time, e.g.
\cite{sola1997importance}. Inputs with zero mean have been shown to result in
gradients that more closely resemble the natural gradient, which speeds up
training \cite{raiko2012deep}. In fact, a range of strategies to
normalize layer input data exists \citep{ioffe2015batch,ba2016layer,ioffe2017self} along
with theoretical analysis of the problem \citep{santurkar2018does}.
Another studied area for maintaining statistics throughout the NN is
initialization of the parameters
\citep{glorot2010understanding,he2015delving,lu2019dying}. However, subsequent
optimization steps may change the parameters such that the desired input mean
and variance no longer is fulfilled.

\paragraph{Normalization of target values.}

When the training data are available before optimization, target normalization is trivially executed. More challenging is the case where training data are accessed incrementally, e.g. as in reinforcement learning or for very large training data. Here, normalization and scaling of target values are important for the learning outcome \cite{van2016learning, henderson2018deep, wu2018ans}.
For on-line regression and reinforcement learning, adaptive target normalization
improves results and removes the need of gradient clipping
\cite{van2016learning}. In reinforcement learning, scaling rewards by a positive
constant is crucial for learning performance, and is often equivalent to the
scaling of target values \cite{henderson2018deep}. Small reward scales have been
seen to increase the risk of dying ReLUs \cite{wu2018ans}. All of these works
motivate the use of target normalization empirically and a theoretical
understanding is still lacking. In this paper, we provide more insight into the
relationship between dying ReLUs and target normalization.

% !TEX root = main.tex
\section{Preliminaries}
\label{sec:preliminaries}

We consider regression of a target function $y \colon \mathbb{R}^L \to
\mathbb{R}$ from training data $\mathcal{D} = \{ (\mathbf{x}^{(i)}, y_*^{(i)})
\}_i$ consisting of pairs of inputs $\mathbf{x}^{(i)} \in \mathbb{R}^L$ and
target values $y_*^{(i)} = y(\mathbf{x}^{(i)})\in \mathbb{R}$. We analyze
different regression models $\hat y$, such as an affine transformation in Sec.
\ref{sec:affine} and a ReLU-activated model in Sec. \ref{sec:single_relu}, which are
both parameterized by a vector $\boldsymbol \theta$. Below, we provide definitions,
notations, and equalities needed for our analysis.

%\todo{
%1) The notation for the training data is a bit misleading because it suggests that $i$ is from the integers and the training set is maximally countable. This would be in conflict with the uncountable training set of the generalized conditions where the distribution is used. Technically this is not work since $i$ could be from the reals. I have thought about removing the indexing with $i$ and if it is too distracting it can be done by introducing differnt notations for target as function and target as value. One could also just write some words that say that $i$ is just a notation.}

\paragraph{Target normalization.}

Before regression, we transform target values $y_*^{(i)}$ according to
\begin{equation}
y^{(i)} = \gamma \frac{y_*^{(i)} - \hat \mu}{\hat\sigma} + \delta
,
\end{equation}
where $\hat \mu$ and $\hat \sigma$ are mean and standard deviation of the target
values from the training data. When the parameters of the transform are set to
scale $\gamma = 1$ and bias $\delta = 0$, new target values
$y^{(i)}$ correspond to $z$-normalization \cite{goldin1995similarity} with zero
mean and unit variance. In our analysis, we are interested in the effects of
$\gamma$ changing from $1$ to smaller values closer to $0$.

\paragraph{Target function.}

In Sec. \ref{sec:affine} we study regression of target functions of the form
\begin{equation}
y(\bx) = \boldsymbol \Gamma^\top \bx + \Delta
\label{eq:desired_response}
\end{equation}
where $\boldsymbol \Gamma \in \mathbb R^{L \times 1}$ and $\Delta \in \R$.

Similar to \citet{douglas2018relu}, we consider the case where inputs
in $\mathcal{D}$ are distributed as $\mathbf x \sim \mathcal{N}(\mathbf{0};
\mathbf{I})$. For any $\boldsymbol \Gamma$, we can find a unitary matrix $\mathbf{O} \in \mathbb{R}^{L \times
L}$ such that
$\mathbf{O}\boldsymbol \Gamma = (0, \dots, 0, \gamma)^\top$ and $\gamma = \|\boldsymbol \Gamma\|$.
From this follow the equalities
\begin{align}
\boldsymbol \Gamma^\top \mathbf x + \Delta
=
\boldsymbol \Gamma^\top \mathbf{O}^\top \mathbf{O} \mathbf x + \Delta
=
(0, \dots ,\gamma) \mathbf{O}\mathbf x + \Delta.
\end{align}
Since $\bx$ and $\mathbf{O}\bx$ are identically distributed due to our
assumption on $\bx$, we can
equivalently study the target function
\begin{equation}
y(\bx) = (0, \dots , 0, \gamma) \mathbf x + \Delta
\label{eq:target_special}
\end{equation}
and assume that $\boldsymbol \Gamma = (0, \dots, 0, \gamma)^\top$ for the remainder of this paper.

For Sec. \ref{sec:single_relu} we consider a ReLU-activated
target function
\begin{equation}
  y_f(\bx) = f(\boldsymbol \Gamma^\top \bx + \Delta)
  \label{eq:target_relu}
\end{equation}
where $f$ is the ReLU activation function $f(x) = \max\{x, 0 \}$.

\paragraph{Regression and Objective.}

We consider gradient descent (GD) optimization with momentum for the parameters $\boldsymbol \theta$. The update from step $t$ to $t+1$ is given as
\begin{equation}
\bm_{t+1} = \beta \bm_t + (1 - \beta) \nabla_{\boldsymbol \theta_t} \L (\boldsymbol \theta_t)
\label{eq:momentum_update}
\end{equation}
for the momentum variable $\mathbf{m}$ and
\begin{equation}
\boldsymbol\theta_{t+1} = \boldsymbol\theta_t - \eta \bm_{t+1}
\label{eq:parameter_update}
\end{equation}
for the parameters $\boldsymbol \theta$, where $\mathcal{L}$ is the loss
function, $\beta \in [0, 1)$ is the rate of momentum, and $\eta \in (0, 1)$ is
the step size.

\paragraph{Regressions Models and Parameterization.}

In Sec. \ref{sec:affine} we model the respective target function with an affine transform
\begin{equation}
\hat y(\bx) = \bw^\top \bx + b,
\label{eq:affine}
\end{equation}
and in Sec. \ref{sec:single_relu} we consider the nonlinear ReLU model
\begin{equation}\label{eq:relu}
  \hat y_f(\bx) = f(\bw^\top \bx + b).
\end{equation}
In both cases, the parameters $\boldsymbol \theta = (\bw^\top, b)^\top \in \R^{L+1}$ are weights $\bw \in \R^{L}$ and bias $b \in \R$.

We optimize the mean squared error (MSE), such that
\begin{equation}
\L (\boldsymbol \theta) = \expx{\frac{1}{2}\epsilon(\bx)^2}.
\label{eq:loss}
\end{equation}
where $\epsilon$ is the signed error given by
$\epsilon(\bx) = \hat y(\bx) - y(\bx)$ for the affine target and
$\epsilon_f(\bx) = \hat y_f(\bx) - y_f(\bx)$ for the ReLU-activated target.

To make gradient calculation easier, and interpretable, we will approximate
the gradients for the ReLU-model by replacing $y_f$ with $y$. This is still a reasonable approximation,
since for any choice of $\bx$ we have that if
\begin{equation}
  y_f(\bx) \geq \hat y(\bx) \Rightarrow \epsilon_f(\bx) = \hat y_f(\bx) - \underbrace{y_f(\bx)}_{\geq y(\bx)} = \hat y_f(\bx) - y(\bx) = \epsilon(\bx)
\end{equation}
and
\begin{equation}
  y_f(\bx) < \hat y_f(\bx) \Rightarrow \epsilon_f(\bx) = \hat y_f(\bx) - y_f(\bx) \geq \hat y_f(\bx) - y(\bx) \geq 0.
\end{equation}
That is, the error and the gradient is either identical or has the same sign
evaluated at any point $\bx$ and $\theta$.

To make calculating expected gradients of $\mathcal{L}$ easier, without
introducing any further approximations, we define a
unitary matrix $\mathbf{U}(\bw)$ (abbreviated $\mathbf{U}$) such that the vector
$\bw$ rotated by $\mathbf{U}$ is mapped to
\begin{equation}\label{eq:u_def}
\mathbf{U}\bw = (0,0, \dots ,\|\bw\|)^\top = \tilde \bw.
\end{equation}
We refer to the rotated vector as $\tilde \bw$. Again, if $\mathbf x \sim
\mathcal{N}(\mathbf{0}; \mathbf{I})$, the variables $\bx$ and $\tilde \bx$ are
identically distributed and for $\bw \neq \mathbf 0$, we also get from
\eqref{eq:u_def}
\begin{equation}
 \frac{\mathbf{U} \bw}{\|\bw\|} = (0,0,\dots ,1)^\top.
 \label{eq:u_property}
\end{equation}

\paragraph {Dying ReLU.}

A ReLU is considered dying if all inputs are negative. For inputs with infinite
support, we consider the ReLU as dying if outputs are non-zero with probability
less than some (small) $\varepsilon$
\begin{equation}\label{eq:dying_relu}
  P\left(\bw^\top \bx + b > 0\right) < \varepsilon
.
\end{equation}

% !TEX root = main.tex
\section{Regression with affine model}\label{sec:affine}

In this section, we analyze the regression of the target function $y$ from
\eqref{eq:target_special} with the affine model $\hat y$ from \eqref{eq:affine}.
From the perspective of the input and output space of this model, it is
identical to the ReLU model in \eqref{eq:relu} for all inputs that map to
positive values. On the other hand, from the parameter space perspective, we
will show that the parameter evolution is identical in certain regions of the
parameter space. The global optimum is also the same for both functions. This
allows us to re-use some of the following results later.

To study the
evolution of parameters $\boldsymbol \theta$ and momentum $\mathbf{m}$ during GD
optimization, we formulate the GD optimization as an equivalent linear autonomous system
\citep{goh2017momentum} and analyze
the behavior by inspection of the eigenvalues. For this analysis, we assume
that the inputs are distributed as $\mathbf x \sim \mathcal{N}(\mathbf{0};
\mathbf{I})$ in the training data.

% \todo{1) If there are results that translate to the other case, there could be a short comment about it at that location.}

\paragraph{Analytic gradients.}
By inserting $\hat y$ into \eqref{eq:loss}, the optimization objective can be formulated as
\begin{align}
\L ( \boldsymbol \theta)
%&=
%\expx{\frac{1}{2}((\bw - \boldsymbol \Gamma)^\top \bx + b - \Delta)^2}
%\\
&= \expx{\frac{1}{2}(\mathbf a^\top \bx + c)^2}
\end{align}
using new shorthand definitions $\ba = \bw - \boldsymbol \Gamma$ and $c = b - \Delta$. Considering that  $\boldsymbol \Gamma = (0, \dots, 0, \gamma)^\top$ from \eqref{eq:target_special}, the derivatives are
\begin{align}
\frac{\partial \L}{\partial \bw}
=
\frac{\partial \L}{\partial \ba}
=
\expx{(\ba^\top \bx + c) \bx}
%\\
= \ba \label{eq:linear_grad_w}
%\\
%&= (w_1, w_2, \dots, w_L - \gamma)^\top
\end{align}
for the weights $\mathbf{w}$ and
\begin{equation}\label{eq:linear_grad_b}
  \frac{\partial \L}{\partial b} = \frac{\partial \L}{\partial c} = c = b - \Delta.
\end{equation}
for the bias $b$.
From these results, we can see that both $\ba$ and $c$ are zero when
$\mathcal{L}(\boldsymbol \theta)$ arrives at a critical point---in this case the
global optimum.

\paragraph{Parameter evolution.}

The parameter evolution as given by Eqs. (\ref{eq:momentum_update}) and
(\ref{eq:parameter_update}) can be formulated as a linear autonomous system in
terms of $\ba$, $c$, and $\mathbf{m}$. The state $\mathbf{s}^\top =
([\mathbf{m}]_1, [\mathbf{a}]_1, \dots , [\mathbf{m}]_L, [\mathbf{a}]_L,
[\mathbf{m}]_{L+1}, c)^\top$ consists of stacked pairs of momentum and
parameters. We write the update equations \eqref{eq:momentum_update} and
\eqref{eq:parameter_update} in the form
\begin{equation}
\mathbf{s}_{t+1}
=
{\tiny
\begin{pmatrix}
\mathbf{A}
\\
& \ddots
\\
&& \mathbf{A}
\end{pmatrix}
}
\mathbf{s}_t
,
\end{equation}
where the state $\mathbf{s}$ evolves according to the constant matrix
\begin{equation}
\mathbf{A} =
    \begin{pmatrix}
      \beta & 1 - \beta \\
      -\eta \beta & 1 - \eta(1 - \beta)
    \end{pmatrix}
    ,
\end{equation}
which determines the evolution of each pair of momentum and parameter independently of other pairs.

\begin{figure}
    \centering
    \includegraphics[width=0.23\textwidth]{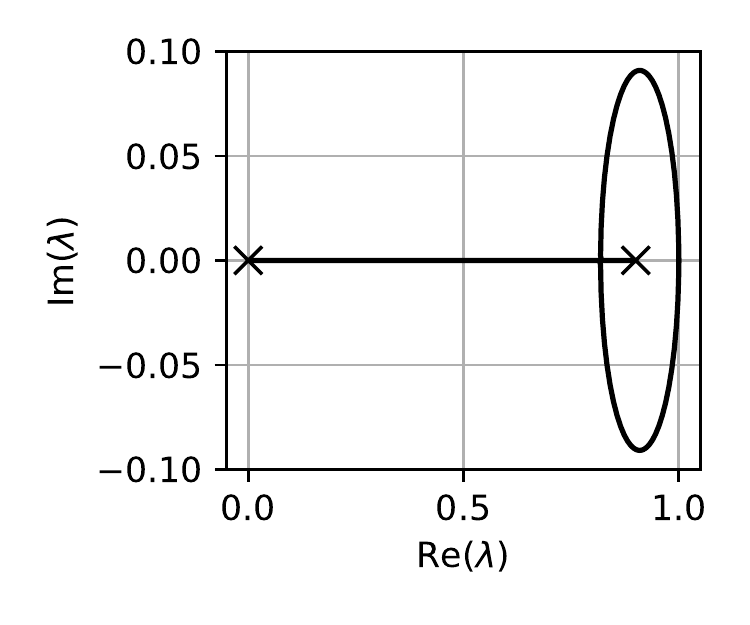}
    \includegraphics[width=0.23\textwidth]{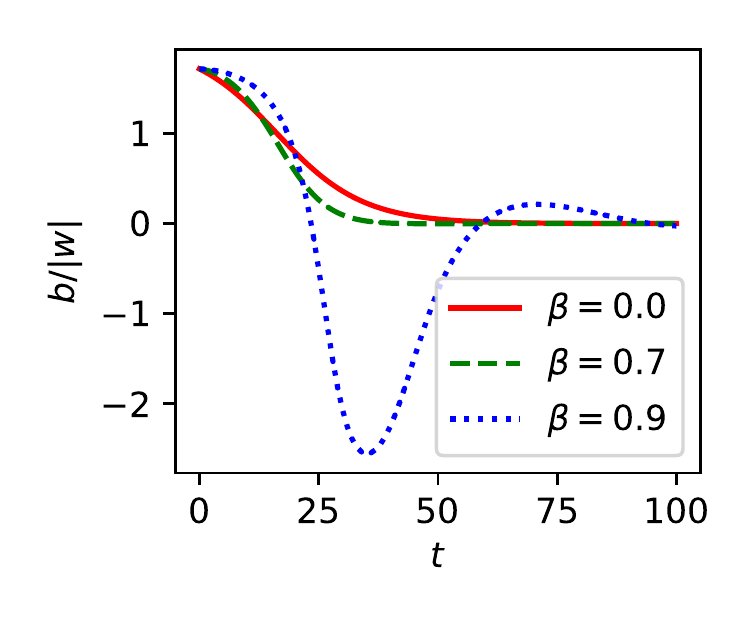}
    \caption{
Left: Eigenvalues of $\mathbf{A}$ as $\beta$ is increased from $0$ (marked as $\times$) to $1$. At $\beta\approx 0.7$ eigenvalues the become complex and take the value $1$ for $\beta = 1$.
Right: As the eigenvalues become complex, gradient descent produces oscillations.
These are three examples all originating from the same parameter coordinate.
%In the linear system, we still have convergence guarantees as long as eigenvalues have norm less than one. In the non-linear system, on the other hand, we will see that these oscillations can evolve the system into states where it can not converge to the ground truth.
}
    \label{fig:root_locus}
\end{figure}

Since the pairs evolve independently, we can study their convergence by
analyzing the eigenvalues of $\mathbf{A}$. Given that $\eta$ and $\beta$ are
from the ranges defined in Sec. \ref{sec:preliminaries}, all eigenvalues are
strictly inside the right half of the complex unit circle which guarantees
convergence to the ground truth. For step sizes $1 < \eta < 2$ eigenvalues
will still be inside the complex unit circle, still guaranteeing convergence,
but eigenvalues can be real and negative. This means that parameters will
alternate signs every gradient update, denoted ``ripples''
\cite{goh2017momentum}. Although this sign-switching can cause dying ReLU in
theory, in practice learning rates are usually $< 1$.

We plot the eigenvalues of $\mathbf{A}$ in \figref{fig:root_locus} (left) for
$\eta = 0.1$ as $\beta$ increases from $0$, i.e. GD without momentum, towards
$1$. We observe that the eigenvalues eventually become complex ($\beta \approx
0.7$) resulting in oscillation \cite{goh2017momentum} (seen on the right side).
The fraction $\frac{b}{\|\bw\|}$, as we will show in Sec. \ref{sec:single_relu},
is a good measure of to what extent the ReLU is dying (smaller means dying), and
hence we plot this quantity in particular. Note that the eigenvalues, and thus
the behavior is entirely parameterized by the learning rate and momentum, and
independent of $\gamma$. Thus, we can not adjust $\eta$ as a function of
$\gamma$ to make the system behave as the case $\gamma = 1$. We now
continue by showing how these properties translate to the case of a
ReLU activated unit.

% !TEX root = main.tex
\section{Regression with single ReLU-unit}\label{sec:single_relu}

We now want to understand the behavior of regressing to
\eqref{eq:target_relu} with the ReLU model in \eqref{eq:relu}. As
discussed in Sec. \ref{sec:preliminaries}, we will approximate the gradients
by considering the linear target function in Eq. \ref{eq:target_special}. Although
this target can not be fully recovered, the optimal solution is still the same,
and gradients share similarities, as previously discussed.
Again, we consider the evolution and convergence of parameters $\boldsymbol
\theta$ and momentum $\mathbf{m}$ during GD optimization, and assume that the
inputs are distributed as $\mathbf x \sim \mathcal{N}(\mathbf{0}; \mathbf{I})$
in the training data.

\paragraph{Similarity to affine model.}

The ReLU will output non-zero values for any $\bx$ that satisfies
$\bw^\top \bx + b > 0$. We can equivalently write this condition as
\begin{align}
\frac{\bw^\top}{\|\bw\|} \bx &> -\frac{b}{\|\bw\|}
.
\label{eq:condition}
\end{align}
We can further simplify the condition from \eqref{eq:condition} using $\mathbf{U}$ from Sec. \ref{sec:preliminaries}, which lets us consider the constraint solely in the $L$-th dimension
\begin{equation}
\footnotesize
\frac{\bw^\top}{\|\bw\|} \mathbf{U}^\top \mathbf{U} \bx
>
-\frac{b}{\|\bw\|}
\Rightarrow
\frac{\tilde \bw^\top}{\|\bw\|} \tilde \bx
>
-\frac{b}{\|\bw\|}
\Rightarrow
[\tilde {\mathbf{x}}]_L
>
-\frac{b}{\|\bw\|}.
\end{equation}
From our assumption about the distribution of training inputs follows that $[\tilde {\mathbf{x}}]_L  \sim \mathcal{N} (0, 1)$.

With this result, we can compute the probability of a non-zero output from the ReLU as
\begin{equation}
\footnotesize
P([\tilde {\mathbf{x}}]_L  > -\frac{b}{\|\bw\|})
=
1 - P\left([\tilde {\mathbf{x}}]_L < -\frac{b}{\|\bw\|}\right)
=
\Phi\left(\frac{b}{\|\bw\|}\right)
,
\end{equation}
where $\Phi$ is the cumulative distribution function (CDF) of the standard normal distribution.

Using \eqref{eq:dying_relu}, we see that dead ReLU is equivalent of
$\Phi\left(\frac{b}{\|\bw\|}\right) < \varepsilon$. This is equivalent of
$\frac{b}{\|\bw\|} < -\Phi^{-1}(\varepsilon)$ which defines a ``dead'' cone in parameter
space. We can also
formulate a corresponding ``linear'' cone. In this case we get
$\frac{b}{\|\bw\|} > \Phi^{-1}(\varepsilon)$ which is the same cone mirrored
along the $b$-axis. In the linear cone, because of the similarity to the affine
model, we know that parameters evolve as described in Sec. \ref{sec:affine} with
increased oscillations as momentum is used. We will now investigate the
analytical gradients to see how these properties translate into that
perspective.

%\todo{It is not clear how this result is used in the next step. A comment is needed here.}

\paragraph{Analytic gradients.}
By inserting $\hat y_f$ into \eqref{eq:loss}, the optimization objective can be formulated as
\begin{align}\label{eq:loss_relu}
\L ( \boldsymbol \theta)
&=
\expx{\frac{1}{2}\mathbbm{1}_{\bw^\top \bx + b > 0}(\mathbf a^\top \bx + c)^2}
,
\end{align}
where we use the indicator function $\mathbbm{1}_{\bw^\top \bx + b > 0}$ to model the ReLU activation.
The derivatives are
\begin{equation}\label{relu_grad_exp_w}
  \frac{\partial \L}{\partial \bw} = \expx{\mathbbm{1}_{\bw^\top \bx + b > 0}(\ba^\top \bx + c)\bx}
\end{equation}
for the weights and
\begin{equation}\label{relu_grad_exp_b}
  \frac{\partial \L}{\partial b} = \expx{\mathbbm{1}_{\bw^\top \bx + b > 0}(\ba^\top \bx + c)}
\end{equation}
for the bias.
As in Sec. \ref{sec:affine}, the optimal fit %of the ReLU from \eqref{eq:relu} for the target function from \eqref{eq:target_special}
is given by weights $\bw = \boldsymbol \Gamma$ and bias $b = \Delta$. These parameters give $\ba = \mathbf 0$ and $c = 0$ and set the gradients in \eqref{relu_grad_exp_w} and \eqref{relu_grad_exp_b} to zero.

% \begin{figure}
%     \centering
%     \includegraphics[width=0.27\textwidth]{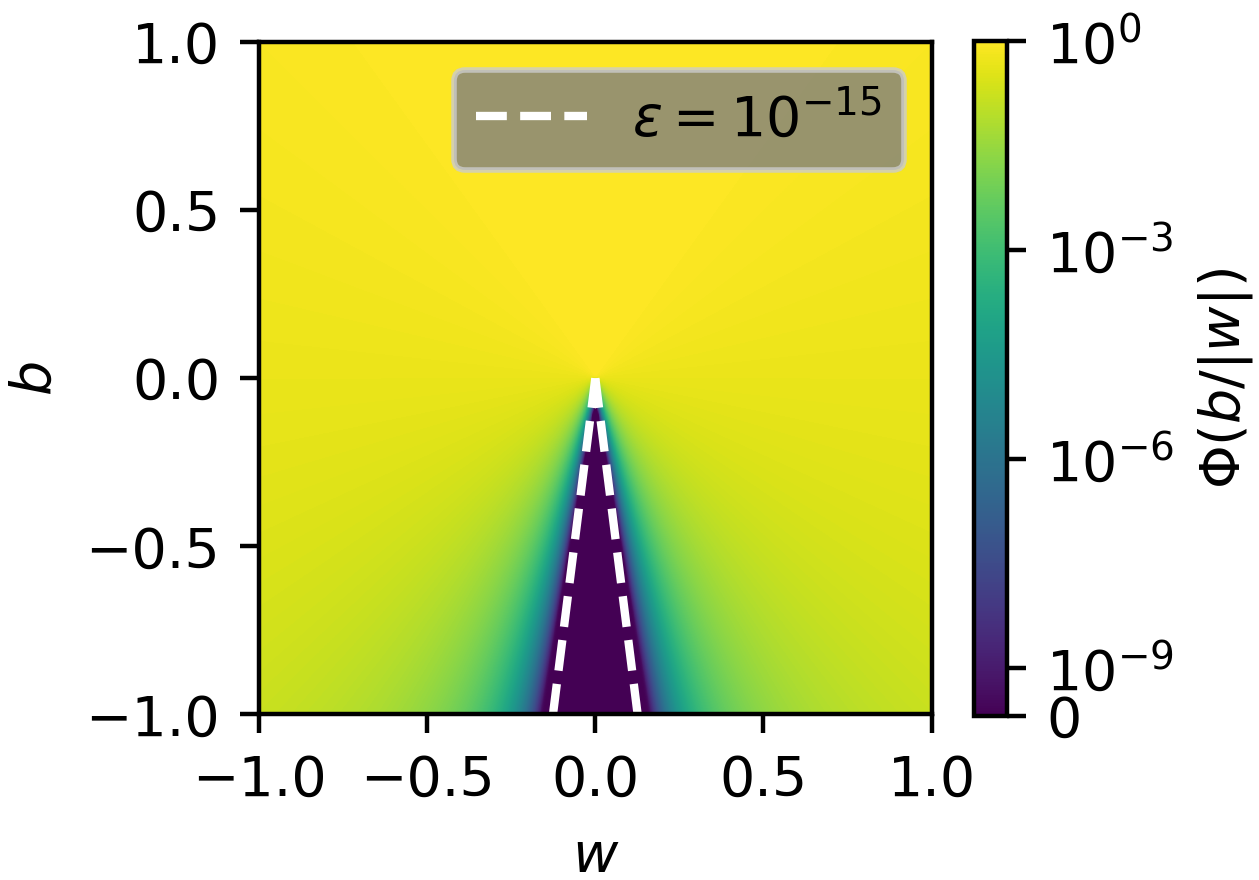}
%     \includegraphics[width=0.20\textwidth]{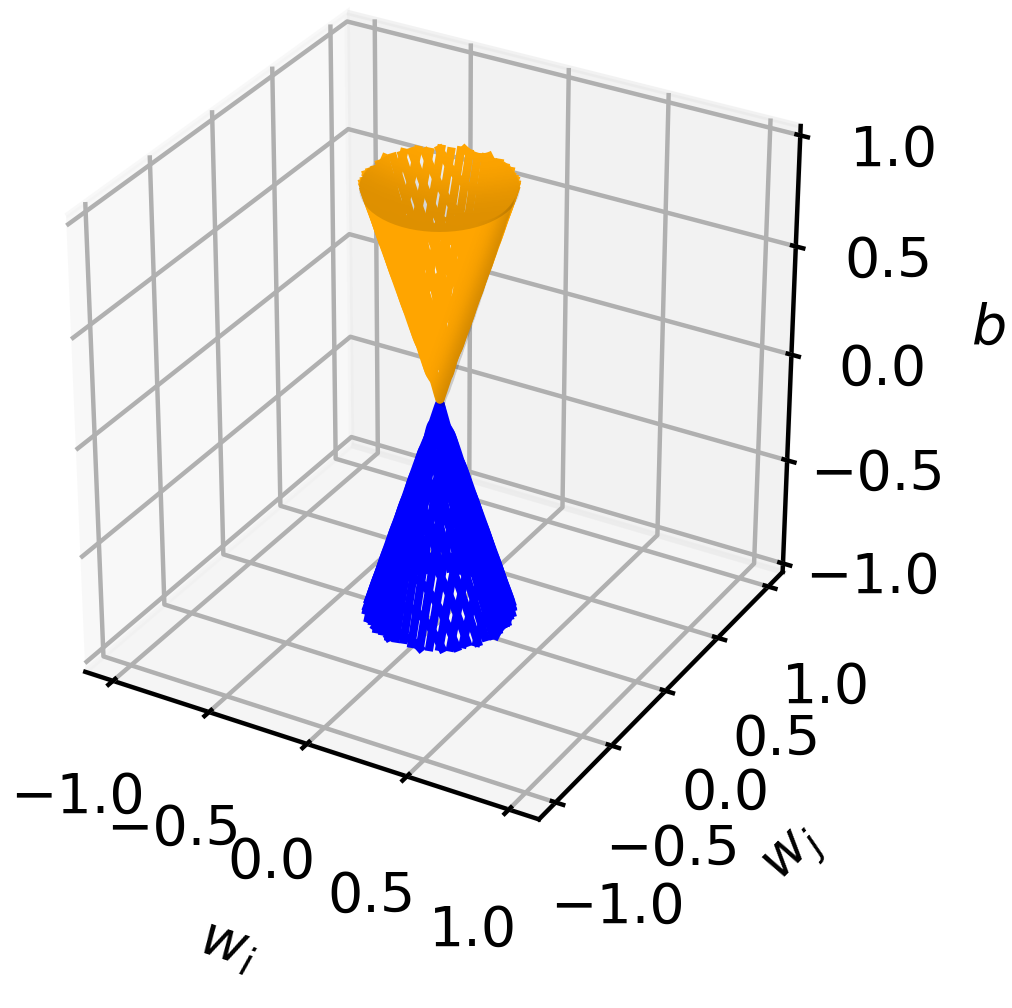}
%     \caption{
% Left: Dying ReLU parameters form a cone (white).
% On its surface, non-zero
% outputs have probability $\varepsilon$, in this case $10^{-15}$,
% and even smaller inside. Where the
% probability is close to $1$ (yellow), the ReLU is approximately an affine
% transformation and where the value is close to $0$ (blue), the ReLU is instead
% approximately constant $0$. Right: Cones of parameters
% corresponding to dying ReLU (blue) and affine transformations (orange).
% }
%
%     \label{fig:output_pdf}
% \end{figure}

By changing base using $\mathbf{U}$, we can compute the derivatives,
\begin{equation}
\label{relu_gradient_i_not_l}
\left[\mathbf{U} \frac{\partial \mathcal L}{\partial \mathbf{w}}\right]_{i \neq L}
=
[\tilde {\mathbf{a}}]_{i \neq L}  \Phi\left(\frac{b}{\| \mathbf{w}\|}\right)
\end{equation}
for the weights in dimensions $i \neq L$, and
\begin{equation}
\left[\mathbf{U} \frac{\partial \mathcal L}{\partial \mathbf{w}}\right]_{L}
=
[\tilde {\mathbf{a}}]_L \Phi\left(\frac{b}{\| \mathbf{w}\|}\right) +
\left(c -  [\tilde {\mathbf{a}}]_L \frac{b}{\|\mathbf{w}\|}\right)\phi\left(\frac{b}{\|\mathbf{w}\|}\right)
           \label{relu_gradient_l}.
\end{equation}
for dimension $L$, where we used $\phi$ as density function of the standard normal distribution.
For the bias $b$ we have
\begin{align}
  \frac{\partial \mathcal L}{\partial b} = [\tilde {\mathbf{a}}]_L \phi\left(\frac{b}{\| \mathbf{w}\|}\right)+ c \ \Phi\left(\frac{b}{\|\mathbf{w}\|}\right).
\end{align}
Full derivations are listed in Appendix \ref{app:relu_gradients}.

\paragraph{Critical points in parameter space.}

With the gradients from above, we can analyze the possible parameters (i.e.
optima and saddle points) to which GD can converge when the derivatives are
zero. First of all, \emph{global optimum} is easily verified to have gradient zero when
$\tilde \ba = \ba = \mathbf{0}$ and $c = 0$.

\emph{Saddle points} correspond to dead ReLU and occur when $\bwfrac
\rightarrow -\infty$ since then $\Phi(\bwfrac) = 0$, $\phi(\bwfrac) = 0$, and
$\bwfrac \phi(\bwfrac) \rightarrow 0$. This equals the case that $\bw = \mathbf{0}$
for any $b < 0$ which can be verified by plugging these values into
\eqref{eq:loss_relu}. These saddle points form the center of the dead cone.
Note in practice that these limits in practice
occur already at, for
example $\bwfrac = \pm 4$, since then
$\phi(\pm 4) \approx 10^{-4}$. The implication of this is that an entire dead
cone can be considered as saddle points in practice, and that parameters will
converge on the surface of the cone rather than in its interior.

For $b > 0$, we instead have $\bwfrac \rightarrow \infty$ and thus
$\phi(\bwfrac) = 0$ and $\Phi(\bwfrac) = 1$ and the gradients can be verified to
equal those of the affine model in Sec. \ref{sec:affine}, as expected. This
verifies that parameter evolution in the linear cone will be approximately identical
to the affine model.

\paragraph{Simplification to enable visualization.}
To continue our investigation of the parameter evolution and, in particular,
focus on how the target variance $\gamma$ and momentum evolve parameters into
the dead cone we will make some simplifications. For this, we will assume that
$[\tilde {\mathbf{a}}]_{i \neq L} = 0$ which enables us to express gradients
without $\mathbf U$ as
\begin{equation}
  \frac{\partial \mathcal L}{\partial [\bw]_{i \neq L} } = 0
.
\end{equation}
For the weight $[\bw]_L$ we can prove, see Appendix \ref{app:single_relu_criticals}, that
\begin{equation}
\footnotesize
\frac{\partial \mathcal L}{\partial [\bw]_L}
=
[\mathbf{a}]_L \Phi \left(\frac{b}{\|\bw\|} \right) + \rho
\left(c - \rho [\mathbf{a}]_L \frac{b}{\|\bw\|}\right) \phi\left(\frac{b}{\|\bw\|}\right)
,
\end{equation}
where $\rho = \mathrm{sign}( [\bw]_L)$ and $[\mathbf a]_L = [\bw]_L - \gamma$.
For the bias $b$ we get
\begin{equation}
\frac{\partial \mathcal L}{\partial b} = \rho [\mathbf a]_L \phi\left(\frac{b}{\|\bw\|}\right)+ c \ \Phi\left(\frac{b}{\|\bw\|}\right).
\end{equation}
This means, if all $ [ \tilde {\mathbf a}]_i = 0$ for $i \neq L$, then only the
weight $[\bw]_L$ and bias $b$ evolve. We can now plot the vector fields in
these parameters to see how they change w.r.t. $\gamma$.

\paragraph{Influence of $\gamma$ on convergence without momentum.}

\begin{figure}
    \centering
    \includegraphics[width=0.8\textwidth]{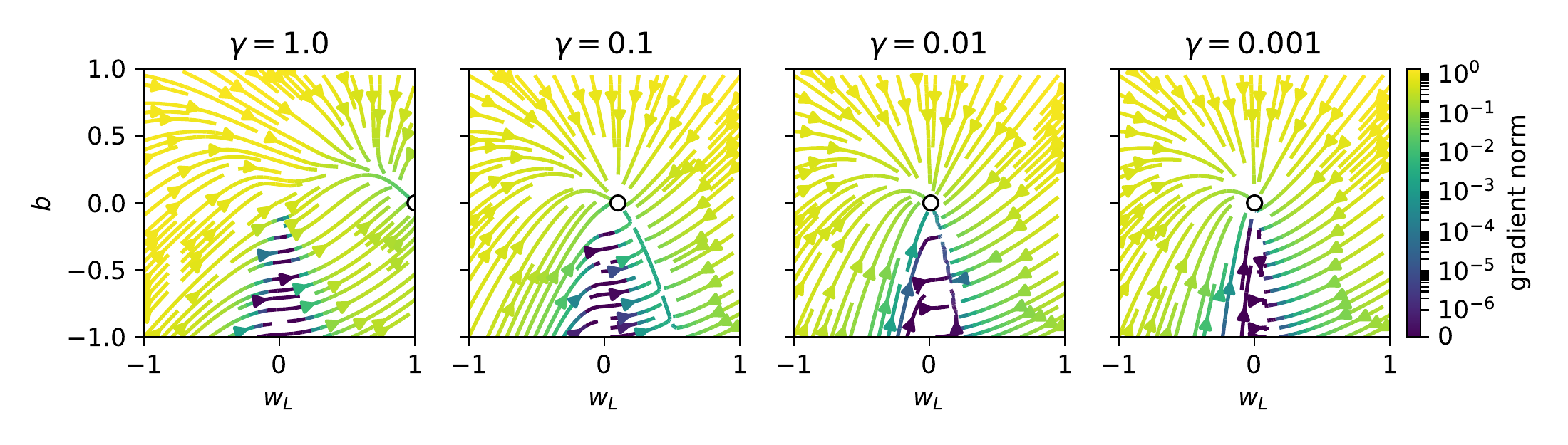}
    \caption{
Parameter evolution from GD. The white dot marks the global optimum. %Note the blue region and that the color scale is logarithmic.
Approaching $[\bw]_L = 0$ in the lower half, gradient norm becomes very small, GD gets stuck , and the ReLU dies.}
    \label{fig:streamplots_small_gamma}
\end{figure}

The first key take-away when decreasing $\gamma$ is that the global optimum will
be closer to the origin and eventually between the dead
and the linear cone. This location of the optimum is particularly sensitive, since
in this case the parameters in the linear cone evolve towards the dead cone, and in
addition exhibits oscillatory behavior for large $\beta$.
The color scheme in \figref{fig:streamplots_small_gamma} verifies that, like the
probability of non-zero outputs in the dead cone, the gradients also
tend to zero there. In the lower right quadrant, we can see an
attracting line that is shifted towards the dead cone as $\gamma$ decreases,
eventually ending up inside the dead cone. For this case, when $\gamma = 0.001$,
we can follow the lines to see that most parameters originating from the right
side end up in the dead cone. Parameters originating in and near the linear cone
approach the ground truth, and the lower left quadrant evolves first towards the
linear cone before evolving towards the ground truth.

When adding momentum, remember that parameter evolution in and near the linear cone exhibits oscillatory
behavior. The hypothesis at this moment is that parameters originating from the
linear cone can oscillate over into the dead cone and get stuck there. For the
other regions they either evolve as before into the dead cone, or first into
the linear cone and then into the dead cone by oscillation. We will now evaluate and visualize
this.

%Also, we can by inspection of this vector field get a rough understanding what happens if momentum is introduced. To know exactly which regions converge where, and how momentum changes the outcome, we performed additional experiments as explained in the following section.

\paragraph{Regions that converge to global optimum.}

We are interested in distinguishing the regions from which parameters will converge to the
global optimum and dead cone respectively. For this, we apply the update rules,
Eqs. (\ref{eq:momentum_update}) and (\ref{eq:parameter_update}), with $\eta = 0.1$, until updates
are smaller (in norm)
than $10^{-6}$.

\begin{figure}[ht!]
    \centering
    \includegraphics[width=0.6\textwidth]{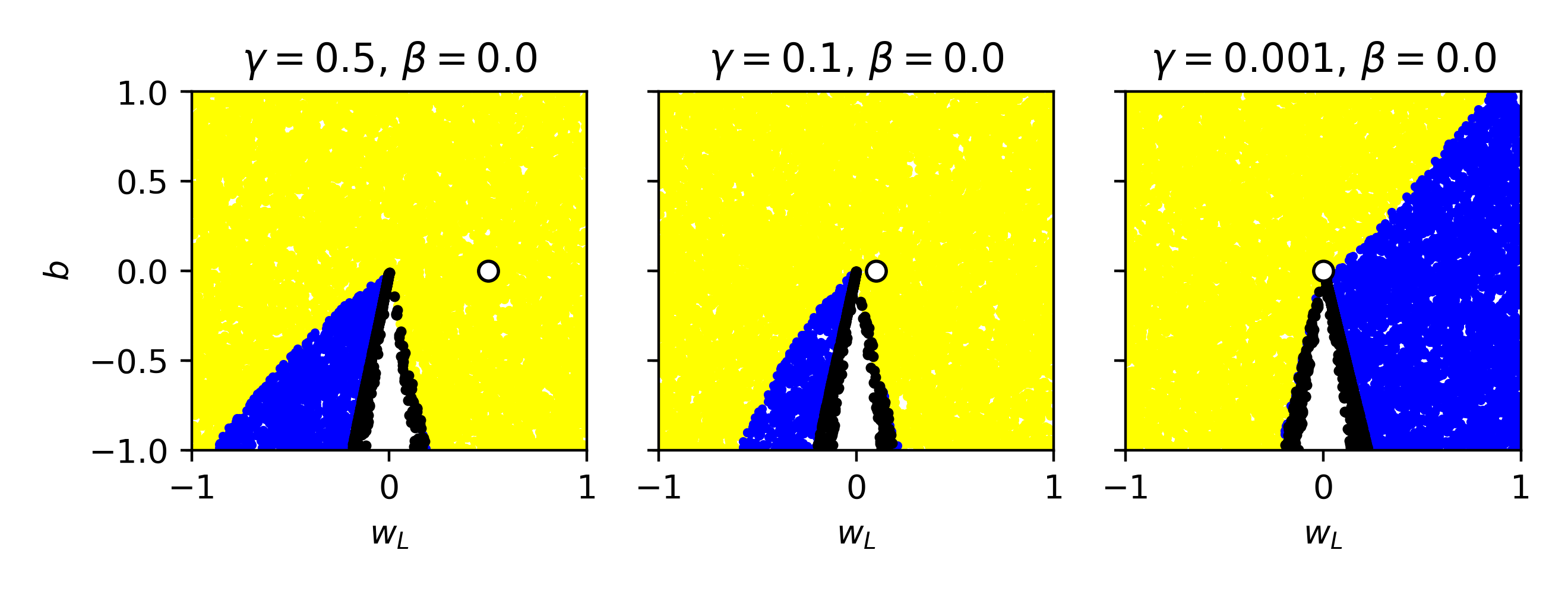}
    \caption{
Parameter evolution from GD without momentum. Global optimum is shown as a white circle.
The yellow dots correspond to initial parameter coordinates that converge at the
ground truth. The blue dots all converge in the dead cone (where they stop are depicted as black dots).
As can be seen, small
target variance $\gamma$ is not alone sufficient to lead to a majority
of dying ReLU.
}
    \label{fig:converging_non_beta}
\end{figure}
\figref{fig:converging_non_beta} shows the results for GD without momentum ($\beta =
0$). We see that the region that converges to the dead cone changes with
$\gamma$, and eventually switches sign, when $\gamma$ becomes small. The
majority of initializations still converges at the ground truth.
\begin{figure}[ht!]
    \centering
    \includegraphics[width=0.8\textwidth]{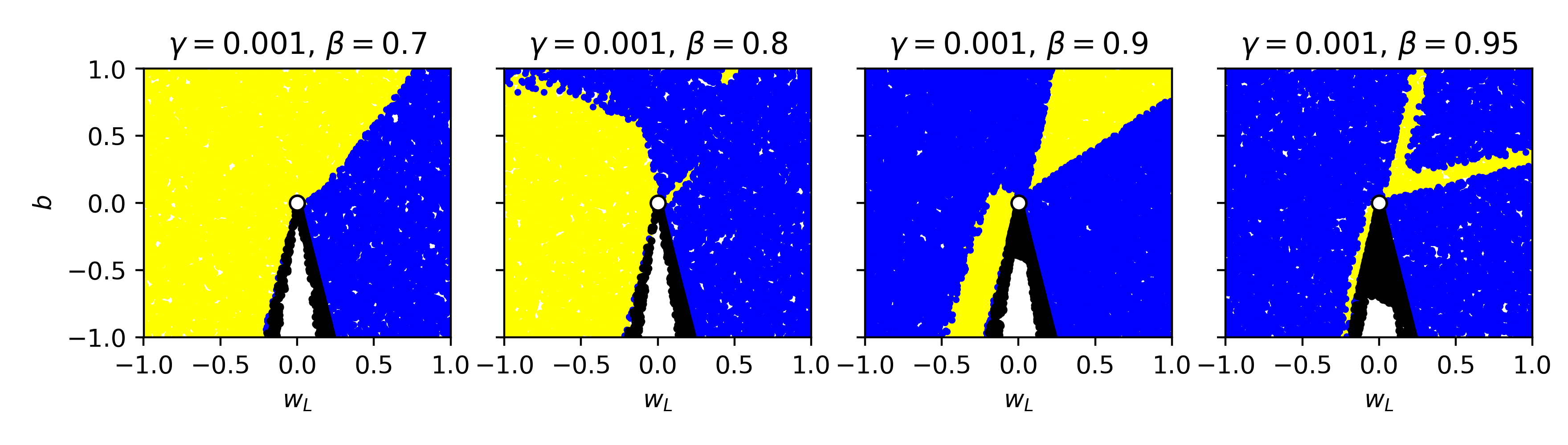}
    \caption{
Parameter evolution from GD without momentum. Colors as in \figref{fig:converging_non_beta}. Momentum and small scale $\gamma = 0.001$ increasingly leads to dying ReLU. The black regions corresponding to converged parameters are growing in the interior of the dead cone.}
    \label{fig:converging_beta}
\end{figure}

\figref{fig:converging_beta} shows the results with momentum. The linear
autonomous system in Sec. \ref{sec:affine} with $\eta= 0.1$ has complex
eigenvalues for $\beta > 0.7$ which lead to oscillation. This property
approximately translates to the ReLU in the linear cone, where we expect
oscillations for $\beta > 0.7$. Indeed, we observe the same results as without
momentum for $\beta \leq 0.7$, but worse results for larger $\beta$. Eventually,
only small regions of initializations are able to converge to the global
optimum.

%Knowing the form and behavior of the linear system, we now instead consider
%learning the parameters of
%%
%\begin{equation}\label{eq:single_relu}
%  \hat y(\bx) = \sigma(\bw^\top \bx + b).
%\end{equation}
%
%\todo{Is this the same equation as \eqref{eq:relu}?}
%We will see that this behaves exactly as the linear model as
%$\frac{b}{\|w\|} \rightarrow \infty$ \todo{What does behaves mean? Is that referring to the learning / regression?}. Conversely, it will be
%equal to zero as $\frac{b}{\|w\|} \rightarrow -\infty$ \todo{What will be equal to zero? The regression result? The eigenvalues?}.
%\todo{steps missing}

\section{Deeper architectures}

In this section we will address two questions. (1) Does the problem persist in deeper
architectures, including residual networks with batch normalization?
(2) Since the ReLU is a linear operator for positive coefficients,
can we simply scale the weight initialization
and learning rate and have an equivalent network and learning procedure? For the
latter we will show that it is not possible for deeper architectures.

\paragraph{Relevance for models used in practice.}

We performed regression on the same datasets as in Sec. \ref{sec:introduction},
with $\gamma = 10^{-4}$. We confirm the findings of \citet{lu2019dying} that ReLUs
are initialized dead in deeper ``vanilla'' NNs, but not in residual networks due
to the batch normalization layer that precedes the ReLU. Results are shown in
\figref{fig:deeper_architectures}. We further find that ReLUs die more often,
and faster, the deeper the architecture, even for residual networks with
batch normalization. We can also conclude that, in these experiments, stochastic
gradient descent does not produce more dead ReLU during optimization.

\begin{figure}[ht!]
    \centering
    \includegraphics[width=0.8\textwidth]{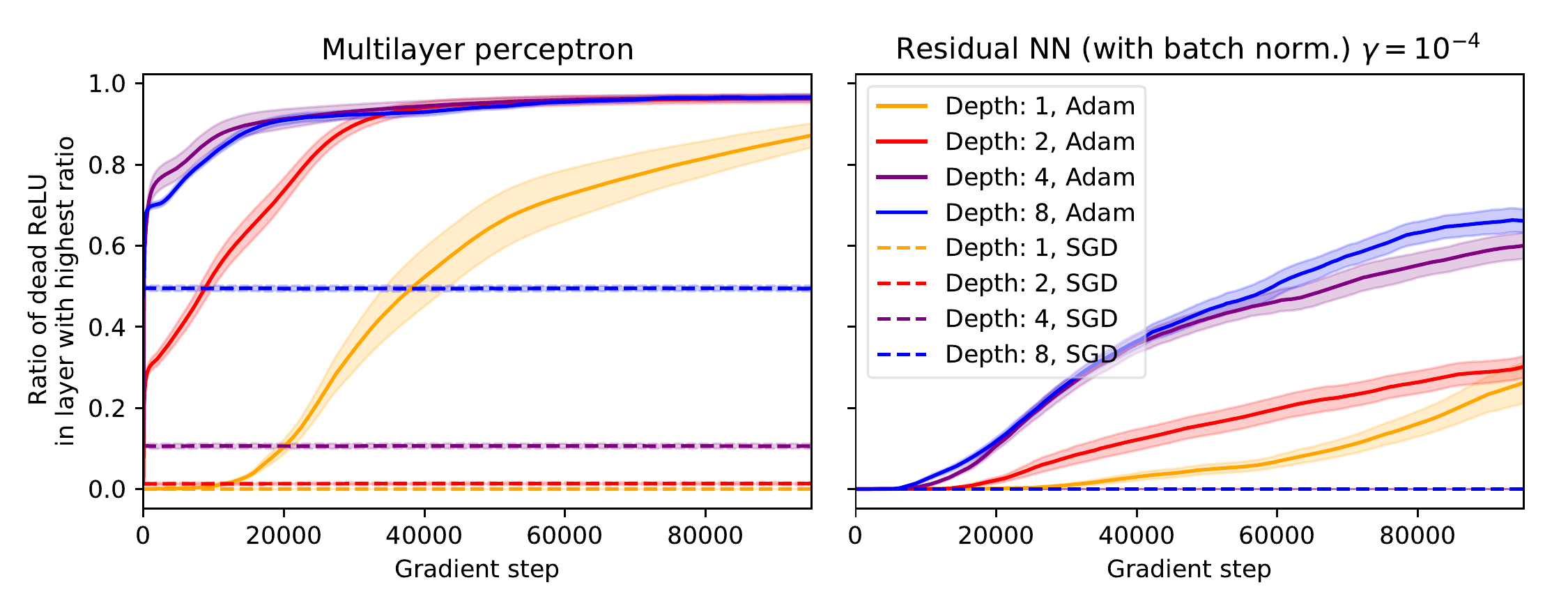}
    \caption{Average results over four datasets showing that dead ReLU is
             aggravated by deeper architectures, even for residual networks
             with batch normalization before the ReLU activation.
             Horizontal axis shows the proportion of dead ReLU in the layer
             with the highest ratio of dead ReLU. In the multi-layer
             perceptron case, if one layer has only dead ReLUs, then
             gradients are zero for all parameters except the last biases, no
             matter which layer ``died''.}
    \label{fig:deeper_architectures}
\end{figure}

\paragraph{Parameter re-scaling.}

For $\gamma > 0$, the ReLU function has the property $f(\gamma \bx) = \gamma
f(\bx)$ and thus $\gamma \hat y(\bx) = f(\gamma \bw^\top \bx + \gamma b)$. By
rescaling the weight and bias initializations (not learning rate) by $\gamma$,
the parameter trajectories during learning will proportional no matter the
choice of $\gamma > 0$. That is, ReLUs will die independent of $\gamma$. This is
a special case though, since for any architecture $\hat y$ with one or more
hidden layers, it is not possible to multiply parameters by a single scalar
$\nu$ such that the function is identical to $\gamma \hat y$. Proof is provided
in Appendix \ref{app:parameter_scaling}. Also, we still
have a problem if we do not know $\gamma$ in advance, which holds for example in
the reinforcement learning setting.

% !TEX root = main.tex
\section{Conclusion and future work}

Target normalization is indeed a well-motivated and used practice, although we
believe a theoretical understanding of \emph{why} is both important and lacking.
We take a first stab at the problem by understanding the properties in the
smallest moving part of a neural network, a single ReLU-activated affine
transformation. Gradient descent on parameters of an affine transformation can
be expressed as a discrete-time linear system, for which we have tools to
explain the behavior. We provide a geometrical understanding of how these
properties translate to the ReLU case, and when it is considered dead. We
further illustrated that weight initialization from large regions in parameter
space lead to dying ReLUs as target variance is decreased along with momentum
optimization, and show that the problem is still relevant, and even aggravated,
for deeper architectures. Remaining questions to be answered include how we
can extend the analysis for the single ReLU to the full network. Here, the
implicit target function of the single ReLU is likely neither of the linear
or piecewise linear functions, and inputs are not Gaussian distributed and vary
over time.

\bibliography{references}
\bibliographystyle{unsrtnat}

% !TEX root = main.tex
\section*{Acknowledgements}
%This work was funded by the Swedish Fund for Strategic
%Research (SSF) through the project Factories of the Future (FACT).
This work was funded by the ***********
through the project ************.

% !TEX root = main.tex
\appendix

\section{Regression experiment details}\label{regression_details}
NN used for regression was of the form
\begin{equation}
  \hat y(\bx) = \bw^\top \sigma (W\bx + b^{(1)}) + b^{(2)}
\end{equation}
where $\bw \in \R^{200 \times 1}$ and $W \in \R^{200 \times n}$ where
$n$ is the dimensionality of the input data. All elements in
$W$ and $b^{(1)}$ were initialized from $U(-\frac{1}{\sqrt{n}},\frac{1}{\sqrt{n}})$.
The elements of $\bw$ and $b^{(2)}$ were initialized from
$U(-\frac{1}{\sqrt{200}},\frac{1}{\sqrt{200}})$. Batch size was $64$ and the Adam
optimizer was set to use the same parameters as suggested by \cite{adam}.
Every experiment was run $8$ times with different seeds, effecting the random
ordering of mini-batches and initialization of parameters. Each experiment
was allowed $250000$ gradient updates before evaluation. Evaluation was performed
on a subset of the \emph{training set}, as we wanted to investigate the fit to the
data rather than generalization.

\section{Single ReLU gradients}\label{app:relu_gradients}

For brevity we will use $\mathbbm 1$ in place of $\mathbbm{1}_{\tilde x_L >
-\frac{b}{\|\bw\|}}$ below. Subscript notation on non-bold symbols
here represents single dimensions of the vector counterpart.
The gradient w.r.t. $\bw$ is
\begin{align*}
  \frac{\partial \L}{\partial \bw} &= \expx{\mathbbm{1}_{\bw^\top \bx + b > 0}(\ba^\top \bx + c)\bx} \\
                                   &= \expx{\mathbbm{1}_{\bw^\top U^\top U \bx + b > 0}(\ba^\top U^\top U\bx + c)U^\top U\bx} \\
                                   &= U^\top \expxt{\mathbbm{1}(\tilde \ba^\top \tilde \bx + c)\tilde \bx}
\end{align*}
Multiplying by $U$ from the left and looking at a dimension $i \neq L$, we get:
\begin{align*}
  \left[U\frac{\partial \L}{\partial \bw}\right]_i &=&& \expxt{\mathbbm{1}(\tilde \ba^\top \tilde \bx + c)\tilde x_i} \\
       &=&& \expxt{\mathbbm{1}\left(\tilde x_i\left(\sum_{j\neq i}^L \tilde a_j \tilde x_j  + c\right) + \tilde a_i \tilde x_i^2\right)} \\
       &=&& \underbrace{\expxt{\mathbbm{1}\tilde x_i}}_{=0}\expxt{\mathbbm{1}\left(\sum_{j\neq i}^L \tilde a_j \tilde x_j + c\right)} \\
       & && +\tilde a_i\expxt{\mathbbm{1} \tilde x_i^2 } \\
       &=&& \tilde a_i\underbrace{\text{Var}(\tilde x_i)}_{=1}\mathbb E_{\tilde x_L} \left[\mathbbm{1} \right] \\
       &=&& \tilde a_i \Phi\left(\frac{b}{\|\bw\|}\right) \\
\end{align*}
For $i=L$ we instead have
\begin{align}
  \left[U\frac{\partial \L}{\partial \bw}\right]_L &=&& \expxt{\mathbbm{1}(\tilde \ba^\top \tilde \bx + c)\tilde x_L} \nonumber\\
       &=&& \expxt{\mathbbm{1}\left(\tilde x_L \sum_{j=1}^{L-1} \tilde a_j \tilde x_j + \tilde a_L \tilde x_L^2 + c\tilde x_L\right)} \nonumber\\
       &=&& \expxt{\mathbbm 1 \tilde x_L} \left(\sum_{j=1}^{L-1} \tilde a_j\underbrace{\expxt{\mathbbm 1 \tilde x_j}}_{=0}\right) \nonumber\\
       & && +\tilde a_L \expxt{\mathbbm 1 \tilde x_L^2} \nonumber \\
       & && +c \expxt{\mathbbm 1 \tilde x_L} \label{eq:gradient_l_not_done}
\end{align}
We can calculate the second term with integration by parts
\begin{align*}
    \expxt{\mathbbm 1 \tilde x_L^2} &=&& \frac{1}{\sqrt{2\pi}}\int\limits_{-\frac{b}{\|\bw\|}}^\infty \tilde x_L \cdot \tilde x_L e^{-\frac{\tilde x_L^2}{2}} d\tilde x_L \\
        &=&& \frac{1}{\sqrt{2\pi}}\left[-\tilde x_L e^{-\frac{\tilde x_L^2}{2}}\right]_{-\frac{b}{\|\bw\|}}^{\infty} \\
        & && +\frac{1}{\sqrt{2\pi}}\int\limits_{-\frac{b}{\|\bw\|}}^\infty e^{-\frac{\tilde x_L^2}{2}} d\tilde x_L \\
        &=&& -\frac{b}{\|\bw\|}\phi\left(\frac{b}{\|\bw\|}\right) + \Phi\left(\frac{b}{\|\bw\|}\right)
\end{align*}
The third term:
\begin{align*}
    \int\limits_{-\frac{b}{\|\bw\|}}^\infty \frac{\tilde x_L}{\sqrt{2\pi}} e^{-\frac{\tilde x_L^2}{2}}
           &= -\left[\frac{1}{\sqrt{2\pi}}e^{-\frac{\tilde x_L^2}{2}}\right]_{-\frac{b}{\|\bw\|}}^{\infty} \\
           &= -\left[\phi(\tilde x_L)\right]_{-\frac{b}{\|\bw\|}}^{\infty} \\
           &= \phi\left(\frac{b}{\|\bw\|}\right).
\end{align*}
By substituting these expressions back in to \eqref{eq:gradient_l_not_done} and simplifying we get \eqref{relu_gradient_l}.
The derivation of gradients w.r.t. the bias is very similar to the gradients w.r.t. to $\bw$
and are omitted here for brevity.

\section{Gradients after convergence in the $(L-1)$ first dimensions}\label{app:single_relu_criticals}

If we assume the $L-1$ first dimensions of $\tilde a$ are zero, we
can show that the $L-1$ first dimensions of $\bw$ all have gradient zero.
We can then also express the gradient without the unitary matrix $U$ that
otherwise needs to be calculated for every new $\bw$. We present this as a
proposition by the end of this section. First, we need to show some preliminary
steps.

\begin{lemma}\label{lemma:u_gamma_subspace}
  If
  $$
    \tilde \ba = (0, ..., 0, \tilde a_L)^\top
  $$
  then
  $$
    U\Gamma = (0, ..., 0, k \gamma)^\top
  $$
  where $k$ is either $-1$ or $1$.
\end{lemma}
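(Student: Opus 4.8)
The plan is to exploit linearity of $U$ together with the defining relation $\ba = \bw - \Gamma$ introduced in Sec.~\ref{sec:affine}. Since $U$ is linear, applying it to $\ba = \bw - \Gamma$ yields $\tilde\ba = \tilde\bw - U\Gamma$, which I would immediately rearrange into the identity $U\Gamma = \tilde\bw - \tilde\ba$. This is the crux of the argument: it expresses the unknown vector $U\Gamma$ entirely in terms of two vectors whose coordinate structure is already under control.

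Next I would read off the first $L-1$ coordinates. By the defining property \eqref{eq:u_def} we have $\tilde\bw = (0,\dots,0,\|\bw\|)^\top$, so its first $L-1$ entries vanish; by the hypothesis of the lemma, the first $L-1$ entries of $\tilde\ba$ vanish as well. Hence for every $i\neq L$ we get $[U\Gamma]_i = [\tilde\bw]_i - [\tilde\ba]_i = 0$, so $U\Gamma$ is supported on its last coordinate alone.

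To pin down that last coordinate, I would invoke the fact that $U$ is unitary and therefore norm-preserving: $\|U\Gamma\| = \|\Gamma\| = \gamma$, recalling from \eqref{eq:target_special} that $\Gamma = (0,\dots,0,\gamma)^\top$ with $\gamma = \|\Gamma\| \geq 0$. A vector whose only possibly nonzero entry is the $L$-th and whose Euclidean norm equals $\gamma$ must be $(0,\dots,0,\pm\gamma)^\top$, which is exactly the claimed form with $k\in\{-1,1\}$.

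I do not expect a genuine obstacle here; once the identity $U\Gamma = \tilde\bw - \tilde\ba$ is written down, the remainder is a two-line computation. The only subtlety worth flagging is why the sign $k$ can genuinely be either $+1$ or $-1$: the matrix $U$ is constrained by \eqref{eq:u_def} only to rotate $\bw$ onto the positive $L$-axis, which leaves the orientation of the image of $\Gamma$ undetermined, so both signs are admissible in general and neither can be ruled out without further assumptions.
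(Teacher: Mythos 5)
Your proposal is correct and follows essentially the same route as the paper's proof: both apply $U$ to the decomposition $\ba = \bw - \Gamma$, use the hypothesis together with $U\bw = (0,\dots,0,\|\bw\|)^\top$ to kill the first $L-1$ coordinates of $U\Gamma$, and then invoke unitarity ($\|U\Gamma\| = \|\Gamma\| = \gamma$) to conclude the last entry is $\pm\gamma$. The only cosmetic difference is that you rearrange to the identity $U\Gamma = \tilde\bw - \tilde\ba$ before reading off coordinates, whereas the paper reads them off directly from $\tilde\ba = U\bw - U\Gamma$.
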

\begin{proof}
  As defined in \eqref{eq:u_def}, we have $U\bw = (0,...,0,\|\bw\|)^\top$. Since
  \begin{align*}
    \tilde \ba &= U \ba \\
               &= U\bw - U\Gamma \\
               &= (0 - [U\Gamma]_1,...,0 - [U\Gamma]_{L-1}, \|\bw\| - [U\Gamma]_L)
  \end{align*}
  we must have
  $U\Gamma = (0,...,0,\ell)^\top$.
  Since U is unitary we must have $|\ell| = \gamma$ which is solved by
  $\ell = \pm\gamma$.
\end{proof}
\begin{lemma}\label{lemma:weight_and_u}
  If
  $$
    \tilde \ba = (0, ..., 0, \tilde a_L)^\top
  $$
  then
  $$
    \bw = (0, ..., 0, w_L)^\top
  $$
\end{lemma}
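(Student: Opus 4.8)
The plan is to leverage Lemma~\ref{lemma:u_gamma_subspace} together with the orthogonality of $U$ to show that $U$ fixes the $L$-th coordinate axis up to sign, and then to invert the defining relation $U\bw = (0,\ldots,0,\|\bw\|)^\top$ to read off $\bw$ directly.

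First I would invoke Lemma~\ref{lemma:u_gamma_subspace}, which under the standing hypothesis $\tilde\ba = (0,\ldots,0,\tilde a_L)^\top$ gives $U\Gamma = (0,\ldots,0,k\gamma)^\top$ with $k\in\{-1,+1\}$. Writing $e_L = (0,\ldots,0,1)^\top$ and recalling that $\Gamma = \gamma\, e_L$, for $\gamma \neq 0$ this immediately yields $U e_L = k\, e_L$; that is, $e_L$ is an eigenvector of $U$ with eigenvalue $k$.

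The key step is to transfer this fixed-axis property from $U$ to its transpose. Since $U$ is unitary we have $U^\top = U^{-1}$, and since $k = \pm 1$ we have $1/k = k$, so applying $U^{-1}$ to the relation $U e_L = k\, e_L$ gives $U^\top e_L = k\, e_L$. Combining this with the definition $U\bw = (0,\ldots,0,\|\bw\|)^\top = \|\bw\|\, e_L$ from \eqref{eq:u_def}, I obtain
\begin{equation*}
  \bw = U^\top U \bw = \|\bw\|\, U^\top e_L = k\|\bw\|\, e_L = (0,\ldots,0, k\|\bw\|)^\top,
\end{equation*}
whose first $L-1$ entries all vanish, proving the claim with $w_L = k\|\bw\|$.

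The only point requiring care is the degenerate case $\gamma = 0$: there $\Gamma = \mathbf 0$, so $\tilde\ba = U\bw = (0,\ldots,0,\|\bw\|)^\top$ holds automatically for \emph{every} $\bw$ and the conclusion can fail. I expect flagging this to be the main subtlety, though it is harmless in context since $\gamma$ denotes a strictly positive target scale throughout the paper. Beyond this, no real computation is needed: the argument is purely an eigenvector-plus-orthogonality manipulation.
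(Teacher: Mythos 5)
Your proof is correct and takes essentially the same route as the paper's: both invoke Lemma~\ref{lemma:u_gamma_subspace} to conclude that $U$ maps the span $\mathcal K$ of $(0,\ldots,0,1)^\top$ into itself, and then use unitarity to pull $U\bw = \|\bw\|(0,\ldots,0,1)^\top$ back to $\bw \in \mathcal K$; you simply make explicit (via $U^\top e_L = k\,e_L$ and $\bw = U^\top U \bw$) the step the paper compresses into ``since $U$ is unitary and $U\bw$ also is in $\mathcal K$, then so must $\bw$.'' Your caveat about $\gamma = 0$ is a fair point that the paper leaves implicit, but it is harmless since $\gamma$ is a strictly positive scale throughout.
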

\begin{proof}
  We have from Lemma \ref{lemma:u_gamma_subspace} that $\Gamma$ and $U\Gamma$
  lie in the same one-dimensional linear subspace
  $\mathcal K$ spanned by $(0,...,1)^\top$.
  Since $U$ is unitary and $U\bw$ also is in $\mathcal K$, then so must $\bw$.
  This implies that $\bw = (0,0,...,0,w_L)^\top$.
\end{proof}
\begin{lemma}\label{lemma:sign_wl}
  If
  $$
    \tilde \ba = (0, ..., 0, \tilde a_L)^\top
  $$
  then for any vector $\mathbf v$ in the linear subspace $\mathcal{K}$ spanned
  by $(0,0,...,1)^\top$ we have
  $$
    U\mathbf v = U^\top \mathbf v = \rho \mathbf v
  $$
  where $\rho = \mathrm{sign}(w_L)$.
\end{lemma}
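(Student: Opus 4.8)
The plan is to determine how $U$ acts on the single direction spanning $\mathcal{K}$ by combining Lemma \ref{lemma:weight_and_u} with the defining property \eqref{eq:u_def} of $U$, and then to extend the result by linearity.

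First I would set $\mathbf{e}_L = (0, \ldots, 0, 1)^\top$, the spanning vector of $\mathcal{K}$, and invoke Lemma \ref{lemma:weight_and_u} to write $\bw = w_L\,\mathbf{e}_L$. Since $\bw$ has only its $L$-th coordinate nonzero, $\|\bw\| = |w_L|$. Substituting into the defining relation $U\bw = (0, \ldots, 0, \|\bw\|)^\top = \|\bw\|\,\mathbf{e}_L$ and using linearity of $U$ gives $w_L\,U\mathbf{e}_L = |w_L|\,\mathbf{e}_L$. Because $U$ is only defined for $\bw \neq \mathbf{0}$, we have $w_L \neq 0$, so dividing through yields $U\mathbf{e}_L = \rho\,\mathbf{e}_L$ with $\rho = \mathrm{sign}(w_L)$. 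Any $\mathbf{v} \in \mathcal{K}$ has the form $\mathbf{v} = v\,\mathbf{e}_L$ for a scalar $v$, so linearity immediately gives $U\mathbf{v} = \rho\,\mathbf{v}$.

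For the transpose, I would exploit that $U$ is unitary, hence $U^\top = U^{-1}$. Applying $U^{-1}$ to the identity $U\mathbf{e}_L = \rho\,\mathbf{e}_L$ gives $\mathbf{e}_L = \rho\,U^\top\mathbf{e}_L$, and since $\rho \in \{-1, +1\}$ implies $\rho^2 = 1$, this rearranges to $U^\top\mathbf{e}_L = \rho\,\mathbf{e}_L$; extending once more by linearity gives $U^\top\mathbf{v} = \rho\,\mathbf{v}$ for all $\mathbf{v} \in \mathcal{K}$.

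I do not anticipate a genuinely hard step: the argument is a one-line consequence of how $U$ was constructed. The only point requiring care is the well-definedness of $\rho = \mathrm{sign}(w_L)$, which is guaranteed because $U$ is defined solely for $\bw \neq \mathbf{0}$, forcing $w_L \neq 0$ under the hypothesis of this lemma.
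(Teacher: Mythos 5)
Your proof is correct and follows essentially the same route as the paper's: both extract $U\mathbf{e}_L = \rho\,\mathbf{e}_L$ (with $\mathbf{e}_L = (0,\dots,0,1)^\top$) from Lemma \ref{lemma:weight_and_u} together with the defining property \eqref{eq:u_def}, and then handle $U^\top$ via unitarity; your scalar-division phrasing versus the paper's last-row/last-column argument is a cosmetic difference only. A minor point in your favor: you make explicit the requirement $w_L \neq 0$ (equivalently $\bw \neq \mathbf{0}$), which the paper's proof uses implicitly when it forms $\bw/\|\bw\|$.
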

\begin{proof}
From Lemma \ref{lemma:weight_and_u}, we have $\bw = (0,...,0,w_L)^\top$,
which implies
$\frac{\bw}{\|\bw\|} = (0,...,0,\rho)^\top$.
By the property of $U$ as defined in \eqref{eq:u_def} we have
$$
  U\frac{\bw}{\|\bw\|} = (0,...,0,1)^\top.
$$
Multiplying by $U^\top$ from the left, we have
\begin{align*}
  U^\top U \frac{\bw}{\|\bw\|} &= U^\top (0,...,0,1)^\top \\
               &= (0,...,0,\rho)^\top
\end{align*}
This implies that the last row and last column of $U$ is
$(0,...,0,\rho)$.
Then
\begin{align*}
  U \mathbf v &= U (0,...,0,v_L)^\top \\
              &= (0,...,0,\rho v_L)^\top \\
              &= \rho \mathbf v
\end{align*}
Since the last row and last column are the same for $U$, the above also holds
for the transpose $U^\top$.
\end{proof}
\begin{proposition}
  If
  $$
    \tilde \ba = (0, ..., 0, \tilde a_L)^\top
  $$
  then
  $$
    \frac{\partial \mathcal L}{\partial w_L} =
      a_L \Phi(\frac{b}{\|w\|})+ \rho(c -  \rho a_L \frac{b}{\|w\|})\phi(\frac{b}{\|w\|})
  $$
  and
  $$
    \frac{\partial \mathcal L}{\partial w_i} =
      0
  $$
  for $i \neq L$ and
  where $\rho = \mathrm{sign}(w_L)$ and $a_L = w_L - \gamma$.
  The gradient w.r.t. the bias is
  $$
    \frac{\partial \mathcal L}{\partial b} = \rho a_L \phi(\frac{b}{\|w\|})+ c \Phi(\frac{b}{\|w\|}).
  $$
\end{proposition}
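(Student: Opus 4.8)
The plan is to start from the general single-ReLU gradients already derived in the rotated frame, namely \eqref{relu_gradient_i_not_l}, \eqref{relu_gradient_l}, and the companion bias gradient $\partial\L/\partial b = [\tilde\ba]_L\,\phi(\tfrac{b}{\|\bw\|}) + c\,\Phi(\tfrac{b}{\|\bw\|})$, and then to undo the rotation $U$ by exploiting the special structure that the hypothesis $\tilde\ba = (0,\dots,0,\tilde a_L)^\top$ imposes. The crucial observation is that under this hypothesis the rotated weight gradient $U\,\partial\L/\partial\bw$ has only its $L$-th entry nonzero: by \eqref{relu_gradient_i_not_l} the entries $i\neq L$ vanish because $[\tilde\ba]_{i\neq L}=0$, while the $L$-th entry is read off directly from \eqref{relu_gradient_l}. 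Hence $U\,\partial\L/\partial\bw$ lies in the one-dimensional subspace $\mathcal K$ spanned by $(0,\dots,0,1)^\top$.

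First I would recover the unrotated gradient as $\partial\L/\partial\bw = U^\top(U\,\partial\L/\partial\bw)$. Since the argument lies in $\mathcal K$, Lemma \ref{lemma:sign_wl} gives $U^\top v = \rho v$ for all $v\in\mathcal K$, where $\rho=\mathrm{sign}(w_L)$, so $\partial\L/\partial\bw = \rho\,(U\,\partial\L/\partial\bw)$. This immediately yields $\partial\L/\partial w_i = 0$ for $i\neq L$ and $\partial\L/\partial w_L = \rho\big(\tilde a_L\Phi(\tfrac{b}{\|\bw\|}) + (c-\tilde a_L\tfrac{b}{\|\bw\|})\phi(\tfrac{b}{\|\bw\|})\big)$. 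The bias gradient is already stated without reference to $U$, so here I would only need to substitute $\tilde a_L$.

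The last task is to rewrite $\tilde a_L$ in terms of $a_L = w_L-\gamma$. By Lemma \ref{lemma:weight_and_u} we have $\bw=(0,\dots,0,w_L)^\top$, hence $\|\bw\|=|w_L|=\rho w_L$ and $[U\bw]_L=\|\bw\|$; combining Lemma \ref{lemma:u_gamma_subspace} with the fact (from the proof of Lemma \ref{lemma:sign_wl}) that the last row and column of $U$ equal $(0,\dots,0,\rho)$ gives $U\Gamma=(0,\dots,0,\rho\gamma)^\top$, i.e. the sign $k$ equals $\rho$. Subtracting, $\tilde a_L = [U\bw]_L - [U\Gamma]_L = \rho w_L - \rho\gamma = \rho a_L$. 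Substituting $\tilde a_L=\rho a_L$ and using $\rho^2=1$ collapses the leading factor in the $\Phi$-term ($\rho\cdot\rho a_L = a_L$) and reproduces exactly the claimed expressions for both $\partial\L/\partial w_L$ and $\partial\L/\partial b$. I expect the only delicate point to be fixing the sign $k$ in Lemma \ref{lemma:u_gamma_subspace} and confirming $k=\rho$, since the whole cancellation that removes $U$ relies on $\tilde a_L=\rho a_L$ rather than an undetermined $\pm a_L$; this is precisely resolved by observing that $\Gamma\in\mathcal K$ and applying Lemma \ref{lemma:sign_wl} to it as well.
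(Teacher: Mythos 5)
Your proposal is correct and follows essentially the same route as the paper's own proof: both note that the hypothesis forces the rotated gradient into the subspace $\mathcal K$, undo the rotation via Lemma \ref{lemma:sign_wl} (picking up the factor $\rho$), and replace $\tilde a_L$ by $\rho a_L$ using Lemmas \ref{lemma:u_gamma_subspace}--\ref{lemma:sign_wl}. Your extra care in pinning down $k=\rho$ in Lemma \ref{lemma:u_gamma_subspace} (by applying Lemma \ref{lemma:sign_wl} to $\Gamma\in\mathcal K$) is a detail the paper's terser proof leaves implicit, but the argument is the same.
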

\begin{proof}
    From Lemma \ref{lemma:u_gamma_subspace} and \ref{lemma:sign_wl} we have that
    $$
      U\ba = U(0, 0,...,0,w_L - \gamma)^\top = (0,0,...,0,\rho (w_L - \gamma))^\top
    $$
    and therefore $\tilde a_L = \rho a_L$. All occurrences of $\tilde a_L$ can now be
    replaced with this quantity.
    The rotated gradient is
    $$
    U \frac{\partial \mathcal L}{\partial w} = (0, 0, ..., 0, \xi)^\top \in \mathcal K
    $$
    where $\xi$ is given by \eqref{relu_gradient_l}. Hence
    $\frac{\partial \mathcal L}{\partial w} = U^\top \left(U \frac{\partial \mathcal L}{\partial w}\right) = (0,0,...,0,\rho\xi)^\top$
    by Lemma \ref{lemma:sign_wl}.
\end{proof}

\section{Scaling of parameters}
\label{app:parameter_scaling}

Before stating the problem, we define an N-hidden layer neural network
recursively
\begin{equation}
  \hat y(\bx) = \bw^\top y^{(N)}(\bx) + b
\end{equation}
where
\begin{equation}
  y^{(n)}(\bx) = f(W^{(n)} y^{(n-1)} + \bb^{(n)})
\end{equation}
and
\begin{equation}
  y^{(0)}(\bx) = \bx.
\end{equation}

Denote the joint collection of parameters $\bw$, $b$, $\{W^{(n)}\}_{i=1}^N$ and
$\{\bb^{(n)}\}_{i=1}^N$ as $\theta$. We denote $\hat y(\bx) = \hat y(\bx, \theta)$
to make explicit the dependence on $\theta$. We now investigate whether
a scaling of $\hat y$ can be achieved by scaling all parameters by a single
number.

\begin{theorem}
  Given some $\gamma > 0$, $\gamma \neq 1$, there exists no $\nu > 0$ such that
  \begin{equation}\label{eq:mlp_nu}
    \gamma \hat y(\bx, \theta) = \hat y(\bx, \nu \theta)
  \end{equation}
\end{theorem}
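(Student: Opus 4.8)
The plan is to exploit the positive homogeneity of the ReLU, $f(\nu z) = \nu f(z)$ for $\nu > 0$, and to show that scaling every parameter by a common factor $\nu$ rescales the output bias and the ``deep signal'' by \emph{different} powers of $\nu$. These two powers cannot be reconciled into a single factor $\gamma$ unless $\gamma = 1$, so assuming such a $\nu$ exists will yield two incompatible conditions on $\nu$, extracted by evaluating the claimed identity on two carefully chosen parameter configurations.

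First I would record the layerwise scaling behaviour. Writing $\tilde y^{(n)}$ for the activations computed from $\nu\theta$, homogeneity gives $\tilde y^{(1)} = f(\nu W^{(1)}\bx + \nu\bb^{(1)}) = \nu\, y^{(1)}$. Propagating this through the recursion, the clean relation $\tilde y^{(n)} = \nu^n y^{(n)}$ holds \emph{exactly} when all biases along the path vanish: the bias at layer $n$ enters the pre-activation at order $\nu^1$, whereas the propagated signal $\nu W^{(n)}\tilde y^{(n-1)}$ enters at order $\nu^{n}$, so a nonzero bias destroys the single-power scaling. This degree mismatch is the heart of the argument.

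Then I would instantiate the identity on two witnesses. Witness A sets every bias (all $\bb^{(n)}$ and $b$) to zero with generic weights; the homogeneous relation then gives $\hat y(\bx,\nu\theta_A) = \nu^{N+1}\hat y(\bx,\theta_A)$, and since $\hat y(\cdot,\theta_A)$ is non-constant this forces $\nu^{N+1} = \gamma$. Witness B sets all weights and all hidden biases to zero and keeps only $b = b_0 \neq 0$, so $\hat y(\cdot,\theta_B) \equiv b_0$ while $\hat y(\cdot,\nu\theta_B) \equiv \nu b_0$, forcing $\nu = \gamma$. Combining the two gives $\gamma^{N+1} = \nu^{N+1} = \gamma$, hence $\gamma^{N} = 1$, and since $\gamma > 0$ and $N \geq 1$ this yields $\gamma = 1$, contradicting $\gamma \neq 1$. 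Equivalently, a single witness with scalar layers, a nonzero first bias, zero intermediate biases and nonzero output bias exhibits, on the input region where all units are active, an affine map whose slope scales as $\nu^{N+1}$ and whose intercept carries a $\nu b_0$ term; matching slope and intercept recovers the same two conditions.

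The main obstacle is the ReLU nonlinearity: one cannot naively match coefficients of $\hat y(\bx,\nu\theta)$ and $\gamma\hat y(\bx,\theta)$, since both are only piecewise linear in $\bx$. The two-witness construction sidesteps this by choosing configurations on which the relevant pieces are globally controlled---Witness A scales homogeneously on every activation pattern, and Witness B is constant---so the comparison reduces to honest scalar equations. A secondary point to handle cleanly is the quantifier over $\theta$: because the theorem denies a single $\nu$ valid across parameter space, exhibiting two configurations that impose contradictory requirements on $\nu$ is exactly what is needed, and it also clarifies why the degenerate all-bias-zero case (where $\nu = \gamma^{1/(N+1)}$ genuinely works) is only a special case.
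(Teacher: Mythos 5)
Your proof is correct, but it takes a genuinely different route from the paper's. The paper factorizes $\gamma = \gamma_{N+1} = \prod_{i=1}^{N+1}\alpha_i$ and pushes the factors through the recursion using positive homogeneity of $f$, concluding that realizing $\gamma\hat y$ by a parameter rescaling would require the layer-$n$ weights to be scaled by $\alpha_n = \nu$ while the layer-$n$ biases are scaled by $\gamma_n = \nu^n$; since $\nu^n \neq \nu$ when $\gamma \neq 1$, a single scalar cannot do both. That is a layer-wise coefficient-matching argument, and its pivotal step (``if a $\nu$ exists then we must have all $\alpha_i = \nu$'') is left informal and implicitly assumes generic, nonzero parameters. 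You instead evaluate the claimed identity at two explicit parameter configurations: all biases zero with non-degenerate weights, which by homogeneity forces $\nu^{N+1} = \gamma$, and all weights and hidden biases zero with output bias $b_0 \neq 0$, which forces $\nu = \gamma$; together these give $\gamma^{N} = 1$, hence $\gamma = 1$, a contradiction with $N \geq 1$. Your witness construction buys two things the paper's proof lacks: it is rigorous about the quantifier structure (the theorem denies a single $\nu$ valid across parameter space, which is exactly what two incompatible witnesses refute), and it makes explicit that the all-bias-zero configuration genuinely admits $\nu = \gamma^{1/(N+1)}$, so the stronger per-$\theta$ reading of the statement is false---a subtlety the paper's generic argument glosses over. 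What the paper's approach buys in exchange is structural insight: it exhibits precisely how the required weight and bias scalings diverge layer by layer, which underpins the main-text observation that the zero-hidden-layer model \emph{can} be consistently rescaled. One small imprecision on your side: the claim that $\tilde y^{(n)} = \nu^n y^{(n)}$ holds ``exactly when'' the biases vanish overstates necessity (dead units or zero weights give degenerate counterexamples), but your argument only uses the sufficiency direction, so nothing breaks.
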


\begin{proof}
  First, denote $\gamma = \gamma_{N+1}$ and factorize into $N+1$ factors $\alpha_i > 0$
  \begin{equation}
    \gamma_n = \prod\limits_{i = 1}^n \alpha_i.
  \end{equation}
  Multiplying $\hat y$ by $\gamma$ gives:
  \begin{align}
    \gamma \hat y(\bx) &= \gamma_{N+1} \bw^\top y^{(N)}(\bx) + \gamma_{N+1} b \\
                       &= \frac{\gamma_{N+1}}{\gamma_{N}} \bw^\top \gamma_N y^{(N)}(\bx) + \gamma_{N+1} b \\
                       &= \alpha_{N+1} \bw^\top \gamma_N y^{(N)}(\bx) + \gamma_{N+1} b \label{eq:mlp_nu_ineq1}
  \end{align}
  and similarly
  \begin{align}
    \gamma_n y^{(n)}(\bx) &= \gamma_n f(W^{(n)}y^{(n-1)}(\bx) + \bb^{(n)}) \\
                          &= f(\alpha_n W^{(n)} \gamma_{n-1} y^{(n-1)}(\bx) + \gamma_n \bb^{(n)}). \label{eq:mlp_nu_ineq2}
  \end{align}
  We finally define $\gamma_0 y^{(0)}(\bx) = 1 \cdot \bx$.

  If a $\nu$ exists that satisfies Eq. (\ref{eq:mlp_nu}), then we must have all
  $\alpha_i = \nu$.
  If $\gamma > 1$, then $\gamma_n = \nu^n > \nu = \alpha_n$
  and thus in Eqs. (\ref{eq:mlp_nu_ineq1}) and (\ref{eq:mlp_nu_ineq2}) the weights
  and biases are not multiplied by the same number $\gamma_n \neq \alpha_n$.
  Similarly, this holds for $\gamma < 1$.

\end{proof}

\end{document}